\newtheorem{thm}{Theorem}
\newtheorem{rem}{Remark}
\newtheorem{pdef}{Problem Definition}
\newenvironment{expl}{\noindent\hspace{2em}{\itshape Explanation: }}{\hfill\QEDopen}
\let\oldnl\nl%
\newcommand{\skipln}{\renewcommand{\nl}{\let\nl\oldnl}} 
\newcommand{\algo}[1]{Alg.~\ref{#1}}
\newcommand{\algoline}[2]{\algo{#1}, Line~\ref{#2}}
\newcommand{\algolines}[3]{\algo{#1}, Lines~\ref{#2}--\ref{#3}}
\newcommand{\fullFigGap}[0]{\vspace{-1.5\baselineskip}}
\newcommand{\algorithmStyle}[0]{\footnotesize} %
\newcommand{\mathStyle}[0]{\textstyle}
\DeclareMathOperator*{\argmin}{arg\,min}
\newcommand{\norm}[2]{\left|\left| #1 \right|\right|_{#2}}
\newcommand{\card}[1]{\left|#1\right|}
\newcommand{\prob}[1]{P\left(#1\right)}
\newcommand{\uniform}[1]{\mathcal{U}\left(#1\right)}
\newcommand{\lebesgueSymb}[0]{\lambda}
\newcommand{\lebesgue}[1]{\lebesgueSymb\left(#1\right)}
\newcommand{\unitBall}[0]{\zeta_n}
\newcommand{\bbm}{\begin{bmatrix}}
\newcommand{\ebm}{\end{bmatrix}}
\newcommand{\pair}[2]{\left( #1, #2\right)}
\newcommand{\set}[1]{\left\lbrace #1\right\rbrace}
\newcommand{\setst}[2]{\left\lbrace #1\;\;\middle|\;\;#2\right\rbrace}
\newcommand{\setInsert}[0]{\xleftarrow{\scriptscriptstyle +}}
\newcommand{\setRemove}[0]{\xleftarrow{\scriptscriptstyle -}}
\newcommand{\stateSet}[0]{X}
\newcommand{\statex}[0]{\mathbf{x}}
\newcommand{\statey}[0]{\mathbf{y}}
\newcommand{\vertexSet}[0]{V}
\newcommand{\stateu}[0]{\mathbf{u}}
\newcommand{\statev}[0]{\mathbf{v}}
\newcommand{\statew}[0]{\mathbf{w}}
\newcommand{\edgeSet}[0]{E}
\newcommand{\treeGraph}[0]{\mathcal{T}}
\newcommand{\interp}[0]{t}
\newcommand{\pathCost}[0]{s}
\newcommand{\cost}[0]{c}
\newcommand{\costToCome}[0]{g}
\newcommand{\costToGo}[0]{h}
\newcommand{\solutionCost}[0]{f}
\newcommand{\edgeCost}[0]{\cost}
\newcommand{\queue}[0]{\mathcal{Q}}
\newcommand{\samplesPerBatch}[0]{m}
\newcommand{\totalSamples}[0]{q}
\newcommand{\pathSeq}[0]{\sigma}
\newcommand{\pathSet}[0]{\Sigma}
\newcommand{\bestPath}[0]{\pathSeq^{*}}
\newcommand{\bestPathCost}[0]{\pathCost^{*}}
\newcommand{\costFromBelow}[1]{\widehat{#1}}
\newcommand{\trueCost}[1]{#1}
\newcommand{\costFromAbove}[1]{#1_{\treeGraph}}
\newcommand{\gBelow}[1]{\costFromBelow{\costToCome}\left(#1\right)}
\newcommand{\gTrue}[1]{\trueCost{\costToCome}\left(#1\right)}
\newcommand{\gAbove}[1]{\costFromAbove{\costToCome}\left(#1\right)}
\newcommand{\hBelow}[1]{\costFromBelow{\costToGo}\left(#1\right)}
\newcommand{\fBelow}[1]{\costFromBelow{\solutionCost}\left(#1\right)}
\newcommand{\cBelow}[2]{\costFromBelow{\edgeCost}\left(#1,#2\right)}
\newcommand{\cTrue}[2]{\trueCost{\edgeCost}\left(#1,#2\right)}
\newcommand{\namedSet}[1]{\stateSet_{#1}}
\newcommand{\obsSet}[0]{\namedSet{\rm obs}}
\newcommand{\freeSet}[0]{\namedSet{\rm free}}
\newcommand{\goalSet}[0]{\namedSet{\rm goal}}
\newcommand{\nearSet}[0]{\namedSet{\rm near}}
\newcommand{\sampleSet}[0]{\namedSet{\rm samples}}
\newcommand{\fBelowSet}[0]{\namedSet{\costFromBelow{\solutionCost}}}
\newcommand{\namedState}[1]{\statex_{#1}}
\newcommand{\xstart}[0]{\namedState{\rm start}}
\newcommand{\xgoal}[0]{\namedState{\rm goal}}
\newcommand{\xmin}[0]{\namedState{\rm m}}
\newcommand{\namedVertex}[1]{\statev_{#1}}
\newcommand{\vmin}[0]{\namedVertex{\rm m}}
\newcommand{\bestEdge}[0]{\pair{\vmin}{\xmin}}
\newcommand{\namedCost}[1]{\cost_{#1}}
\newcommand{\cbest}[0]{\namedCost{\rm best}}
\newcommand{\cbestfinal}[0]{\namedCost{{\rm best},\totalSamples}}
\newcommand{\namedQueue}[1]{\queue_{#1}}
\newcommand{\edgeQueue}[0]{\namedQueue{E}}
\newcommand{\vertexQueue}[0]{\namedQueue{V}}
\newcommand{\namedVertexSet}[1]{\vertexSet_{#1}}
\newcommand{\oldVertices}[0]{\namedVertexSet{\rm old}}
\newcommand{\nearVertices}[0]{\namedVertexSet{\rm near}}
\newcommand{\UTIAStitle}{Batch Informed Trees (BIT*): Sampling-based Optimal Planning via the Heuristically Guided Search of Implicit Random Geometric Graphs}
\title{\LARGE \bf \UTIAStitle}
\author{Jonathan D.\ Gammell$^1$, Siddhartha S.\ Srinivasa$^2$, and Timothy D.\ Barfoot$^1$%
\thanks{$^1$ J.\ D.\ Gammell and T.\ D.\ Barfoot are with the Autonomous Space Robotics Lab at the University of Toronto, Toronto, Ontario, Canada.
Email: \texttt{\{jon.gammell, tim.barfoot\}@utoronto.ca}}
\thanks{$^2$ S.\ S.\ Srinivasa is with The Personal Robotics Lab at Carnegie Mellon University, Pittsburgh, Pennsylvania, USA. Email: \texttt{siddh@cs.cmu.edu}}%
}
\begin{document}
\begin{acronym}[UTIAS]

    \acro{ASRL}{Autonomous Space Robotics Lab}
    \acro{CMU}{Carnegie Mellon University}
    \acro{CSA}{Canadian Space Agency}
    \acro{DRDC}{Defence Research and Development Canada}
    \acro{KSR}{Koffler Scientific Reserve at Jokers Hill}
    \acro{MET}{Mars Emulation Terrain}
    \acro{MIT}{Massachusetts Institute of Technology}
    \acro{NASA}{National Aeronautics and Space Administration}
    \acro{NSERC}{Natural Sciences and Engineering Research Council of Canada}
    \acro{NCFRN}{\acs{NSERC} Canadian Field Robotics Network}
    \acro{NORCAT}{Northern Centre for Advanced Technology Inc.}
    \acro{ODG}{Ontario Drive and Gear Ltd.}
    \acro{ONR}{Office of Naval Research}
    \acro{USSR}{Union of Soviet Socialist Republics}
    \acro{UofT}{University of Toronto}
    \acro{UW}{University of Waterloo}
    \acro{UTIAS}{University of Toronto Institute for Aerospace Studies}

    \acro{ACPI}{advanced configuration and power interface}
    \acro{CLI}{command-line interface}
    \acro{GUI}{graphical user interface}
    \acro{JIT}{just-in-time}
    \acro{LAN}{local area network}
    \acro{MFC}{Microsoft foundation class}
    \acro{NIC}{network interface card}
    \acro{SDK}{software development kit}
    \acro{HDD}{hard-disk drive}
    \acro{SSD}{solid-state drive}

    \acro{ICRA}{IEEE International Conference on Robotics and Automation}
    \acro{IJRR}{International Journal of Robotics Research}
    \acro{IROS}{IEEE/RSJ International Conference on Intelligent Robots and Systems}
    \acro{RSS}{Robotics: Science and Systems Conference}

    \acro{DOF}{degree-of-freedom}
        \acrodefplural{DoF}[DoFs]{degrees-of-freedom} %

        \acro{FOV}{field of view}
            \acrodefplural{FOV}[FOVs]{fields of view}
        \acro{HDOP}{horizontal dilution of position}
        \acro{UTM}{universal transverse mercator}
        \acro{WAAS}{wide area augmentation system}
        \acro{AHRS}{attitude heading reference system}
        \acro{DAQ}{data acquisition}
        \acro{DGPS}{differential global positioning system}
        \acro{DPDT}{double-pole, double-throw}
        \acro{DPST}{double-pole, single-throw}
        \acro{GPR}{ground penetrating radar}
        \acro{GPS}{global positioning system}
        \acro{LED}{light-emitting diode}
        \acro{IMU}{inertial measurement system}
        \acro{PTU}{pan-tilt unit}
        \acro{RTK}{real-time kinematic}
        \acro{R/C}{radio control}
        \acro{SCADA}{supervisory control and data acquisition}
        \acro{SPST}{single-pole, single-throw}
        \acro{SPDT}{single-pole, double-throw}
        \acro{UWB}{ultra-wide band}

    \acro{DDS}{Departmental Doctoral Seminar}
    \acro{DEC}{Doctoral Examination Committee}
    \acro{FOE}{Final Oral Exam}
    \acro{ICD}{Interface Control Document}

    \acro{iid}[i.i.d.]{independent and identically distributed}
    \acro{aas}[a.a.s.]{asymptotically almost-surely}
    \acro{RGG}{random geometric graph}

    \acro{EKF}{extended Kalman filter}
    \acro{iSAM}{incremental smoothing and mapping}
    \acro{ISRU}{in-situ resource utilization}
    \acro{PCA}{principle component analysis}
    \acro{SLAM}{simultaneous localization and mapping}
    \acro{SVD}{singular value decomposition}
    \acro{UKF}{unscented Kalman filter}
    \acro{VO}{visual odometry}
    \acro{VTR}[VT\&R]{visual teach and repeat}

        \acro{ADstar}[AD*]{Anytime D∗}
        \acro{ADAstar}[ADA*]{Anytime Dynamic A*}
        \acro{ARAstar}[ARA*]{Anytime Repairing A*}
        \acro{BITstar}[BIT*]{Batch Informed Trees}
            \acrodefplural{BITstar}[BIT*]{Batch Informed Trees}
        \acro{BRM}{belief roadmap}
        \acro{Dstar}[D*]{Dynamic A∗}
        \acro{EST}{Expansive Space Tree}
            \acrodefplural{EST}[EST]{Expansive Space Trees}
        \acro{FMTstar}[FMT*]{Fast Marching Trees}
            \acrodefplural{FMTstar}[FMT*]{Fast Marching Trees}
        \acro{LQG-MP}{linear-quadratic Gaussian motion planning}
        \acro{LPAstar}[LPA*]{Lifelong Planning A*}
        \acro{MPLB}{Motion Planning Using Lower Bounds}
        \acro{MDP}{Markov decision process}
            \acrodefplural{MDP}[MDPs]{Markov decision processes}
        \acro{NRP}{network of reusable paths}
            \acrodefplural{NRP}[NRPs]{networks of reusable paths}
        \acro{POMDP}{partially-observable Markov decision process}
            \acrodefplural{POMDP}[POMDPs]{partially-observable Markov decision processes}
        \acro{PRM}{Probabilistic Roadmap}
            \acrodefplural{PRM}[PRM]{Probabilistic Roadmaps}
        \acro{PRMstar}[PRM*]{optimal \acp{PRM}}
        \acro{RAstar}[RA*]{Randomized A*}
        \acro{RRG}{Rapidly-exploring Random Graph}
        \acro{RRM}{Rapidly-exploring Roadmap}
        \acro{RRT}{Rapidly-exploring Random Tree}
            \acrodefplural{RRT}[RRT]{Rapidly-exploring Random Trees}
        \acro{hRRT}{Heuristically Guided \acs{RRT}}
        \acro{RRTstar}[RRT*]{optimal \acs{RRT}}
            \acrodefplural{RRTstar}[RRT*]{optimal \acsp{RRT}}
        \acro{RRBT}{rapidly-exploring random belief tree}
        \acro{SBAstar}[SBA*]{Sampling-based A*}

    \acro{HERB}{Home Exploring Robot Butler}
    \acro{MER}{Mars Exploration Rover}
    \acro{MSL}{Mars Science Laboratory}
    \acro{OMPL}{Open Motion Planning Library}
    \acro{ROS}{Robot Operating System}

\end{acronym} %

\maketitle

\thispagestyle{empty}
\pagestyle{empty}
\setlength\parskip{0ex plus0.1ex minus0.1ex}

\makeatletter
\renewcommand\section{\@startsection{section}{1}{\z@}%
    {0.8ex plus 0.0ex minus 0.3ex}%
    {0.5ex plus 0.0ex minus 0ex}%
    {\normalfont\normalsize\centering\scshape}}%
\renewcommand\subsection{\@startsection{subsection}{2}{\z@}%
    {0.8ex plus 0.0ex minus 0.3ex}%
    {0.5ex plus 0.0ex minus 0ex}%
    {\normalfont\normalsize\itshape}}%
\renewcommand\subsubsection{\@startsection{subsubsection}{3}{\parindent}%
    {0.5ex plus 0.0ex minus 0.1ex}%
    {0ex}%
    {\normalfont\normalsize\itshape}}%
\renewcommand\paragraph{\@startsection{paragraph}{4}{2\parindent}%
    {0ex plus 0.0ex minus 0.1ex}%
    {0ex}%
    {\normalfont\normalsize\itshape}}%
\makeatother

\setlength \abovedisplayskip{1ex plus0pt minus1pt}
\setlength \belowdisplayskip{1ex plus0pt minus1pt}

\setlength{\skip\footins}{0.5\baselineskip  plus 0.0\baselineskip  minus 0.2\baselineskip} %

\setlength\floatsep{0.9\baselineskip plus0pt minus0.2\baselineskip}                      %
\setlength\textfloatsep{0.2\baselineskip plus0pt minus0.4\baselineskip}                %
\setlength\abovecaptionskip{0.pt plus0pt minus0pt}                                      %

\begin{abstract}
In this paper, we present \ac{BITstar}, a planning algorithm based on unifying graph- and sampling-based planning techniques.
By recognizing that a set of samples describes an implicit \ac{RGG}, we are able to combine the efficient ordered nature of graph-based techniques, such as A*, with the anytime scalability of sampling-based algorithms, such as \acp{RRT}.

\ac{BITstar} uses a heuristic to efficiently search a series of increasingly dense implicit \acp{RGG} while reusing previous information.
It can be viewed as an extension of incremental graph-search techniques, such as \ac{LPAstar}, to continuous problem domains as well as a generalization of existing sampling-based optimal planners.
It is shown that it is probabilistically complete and asymptotically optimal.

We demonstrate the utility of \ac{BITstar} on simulated random worlds in $\mathbb{R}^2$ and $\mathbb{R}^8$ and manipulation problems on \acs{CMU}'s \acs{HERB}, a 14-\acs{DOF} two-armed robot.
On these problems, \ac{BITstar} finds better solutions faster than \ac{RRT}, \acs{RRTstar}, Informed \acs{RRTstar}, and \ac{FMTstar} with faster anytime convergence towards the optimum, especially in high dimensions.
\end{abstract}
\acresetall %

\section{Introduction}\label{sec:intro}
Graph-search and sampling-based methods are two popular techniques for path planning in robotics.
Graph-based searches, such as Dijkstra's algorithm \cite{dijkstra_59} and A* \cite{hart_tssc68}, use dynamic programming \cite{bellman_ams54} to exactly solve a discrete approximation of a problem. %
These algorithms are not only \emph{resolution complete} but also \emph{resolution optimal}, always finding the optimal solution to the given problem at the chosen discretization, if one exists.
A* does this efficiently by using a heuristic to estimate the total cost of a solution constrained to pass through a state.
The result is an algorithm that searches in order of decreasing solution quality and is \emph{optimally efficient}. %
Any other optimal algorithm using the same heuristic will expand at least as many vertices as A* \cite{hart_tssc68}.

The quality of the \emph{continuous} solution found by these graph-search techniques depends heavily on the discretization of the problem.
Finer discretization increases the quality of the solution \cite{bertsekas_tac75}, but also increases the computational effort necessary to find it.
This becomes a significant problem in high-dimensional spaces, such as for manipulation planning (Fig.~\ref{fig:2armHerb}), as the size of the discrete state space grows exponentially with the number of dimensions.
Bellman \cite{bellman_57} referred to this problem as the \textit{curse of dimensionality}.
Graph-search techniques have still been successful as planning algorithms \cite{lozano-perez_cacm79}  on a variety of graph types \cite{chen_icra92,sallaberger_acta95}, including for nonholonomoic robots \cite{barraquand_icra91,lynch_ijrr96}, kinodynamic planning \cite{cherif_icra99,donald_jacm93}, and manipulation planning \cite{kondo_tra91}.

Graph search has also been extended to \emph{anytime} and \emph{incremental} search.
Anytime techniques \cite{likhachev_icaps05,ferguson_icra05,likhachev_ai08} quickly find a suboptimal path before completing the search for the optimum, while incremental techniques \cite{koenig_ai04,stentz_ijcai95,koenig_tro05,ferguson_icra05,likhachev_ai08} handle changes in a graph efficiently by reusing information.

\begin{figure}[t]
    \centering
    \includegraphics[width=\columnwidth]{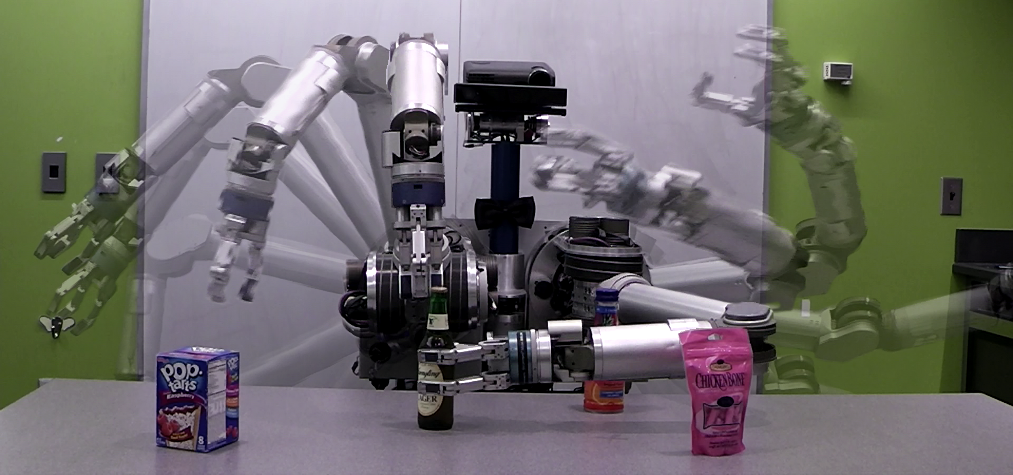}
    \caption{A composite figure of a trajectory generated by \acs{BITstar} for a difficult $14$-\acs{DOF} two-arm manipulation planning problem on \acs{HERB}.
        In the trial pictured, \acs{BITstar} found a solution in $4$ seconds and spent $2.5$ minutes refining it.
        Over $25$ trials with $2.5$ minutes of computational time, \acs{BITstar} had a median solution cost of $17.4$ and success rate of $68\%$, while Informed \acs{RRTstar} and \acs{FMTstar} had median costs of $25.3$ and $17.2$ and success rates of $8\%$ and $36\%$, respectively.
        Nonoptimal planners, \acs{RRT} and \acs{RRT}-Connect, had median costs of $31.1$ and $22.1$ and success rates of $8\%$ and $100\%$, respectively.}
    \label{fig:2armHerb}
\end{figure}%

Sampling-based planners, such as \acp{PRM} \cite{kavraki_tro96}, \acp{RRT} \cite{lavalle_ijrr01}, and \acp{EST} \cite{hsu_ijrr02}, avoid the discretization problems of graph-search techniques by randomly sampling the continuous planning domain.
This scales more effectively to high-dimensional problems, but makes their search probabilistic.
They are \emph{probabilistically complete}, having a probability of finding a solution, if one exists, that goes to one as the number of samples goes to infinity.
Anytime algorithms, such as \ac{RRT} and \ac{EST}, also have \emph{anytime resolution}, a growing representation of the problem domain that becomes increasingly accurate as the number of iterations increases.
Optimal variants, such as \acs{RRTstar} and \acs{PRMstar} \cite{karaman_ijrr11}, are also \emph{asymptotically optimal}, converging asymptotically to the optimal solution with probability one as the number of samples goes to infinity (\emph{almost sure} asymptotic convergence).\acused{RRTstar}\acused{PRMstar}
While solutions improve with computational time, this does not guarantee a reasonable rate of convergence as the random sampling is inherently \emph{unordered}.

There is a long history of adding graph-search concepts to sampling-based planners.
Algorithms have used heuristics to refine the \ac{RRT} search, including by biasing the sampling procedure \cite{urmson_iros03}, and to define a series of subplanning problems given the current solution \cite{ferguson_iros06}.
Similarly, focusing techniques have also been used to limit the search of \ac{RRTstar} once it finds a solution \cite{akgun_iros11,otte_tro13,gammell_iros14}.
While these techniques can improve the initial solution and/or the convergence rate to the optimum, their \acs{RRT}-based search is still unordered.

Other algorithms order the search at the expense of anytime resolution.
\ac{FMTstar} \cite{janson_isrr13} uses a marching method to process a single set of samples.
The resulting search is ordered on cost-to-come but must be restarted if a higher resolution is needed.
The \ac{MPLB} algorithm \cite{salzman_icra15} extends \ac{FMTstar} to quasi-anytime resolution and an ordering given by estimating the cost of solutions constrained to pass through each state.
The quasi-anytime resolution is achieved by solving a series of independent problems with an increasing number of samples.
It is stated that this can be done efficiently by reusing information, but no specific methods are presented.

Still other algorithms attempt to extend graph-search directly to continuous planning problems.
In \ac{RAstar} \cite{diankov_iros07} and \ac{SBAstar} \cite{persson_ijrr14} a tree is grown towards solutions by sampling near heuristically selected vertices.
This biases the growth of the tree towards good solutions but requires methods to avoid local minima.
\ac{RAstar} defines a minimum-allowed distance between vertices, limiting the number of times a vertex can be expanded but also limiting the final resolution.
\ac{SBAstar} includes a measure of local sample density in the vertex expansion heuristic.
This decreases the priority of sampling near frequently expanded vertices, but requires methods to estimate local sample density.

In this paper, we present \ac{BITstar}, a planning algorithm that balances the benefits of graph-search and sampling-based techniques.
It uses batches of samples to perform an ordered search on a continuous planning domain while maintaining anytime performance.
By processing samples in batches, its search can be ordered around the minimum solution proposed by a heuristic, as in A* \cite{hart_tssc68}.
By processing multiple batches of samples, it converges asymptotically towards the global optimum with anytime resolution, as in \ac{RRTstar} \cite{karaman_ijrr11}.
This is done efficiently by using incremental search techniques to incorporate the new samples into the existing search, as in \ac{LPAstar} \cite{koenig_ai04}.
The multiple batches also allow subsequent searches to be focused on the subproblem that could contain a better solution, as in Informed \ac{RRTstar} \cite{gammell_iros14}.

The performance of \ac{BITstar} is demonstrated both on random experiments in $\mathbb{R}^2$ and $\mathbb{R}^{8}$ and manipulation problems on the \acs{CMU} Personal Robotic Lab's \ac{HERB} \cite{herb}.
The results show that \ac{BITstar} consistently outperformed both nonasymptotically and asymptotitcally optimal planners (\ac{RRT}, \ac{RRTstar}, Informed \acs{RRTstar}, and \ac{FMTstar}).
It was more likely to have found a solution at a given computational time and converged towards the optimum faster.
The same held in difficult planning problems on \ac{HERB}, where collision checking is expensive.
\ac{BITstar} was nearly twice as likely to find a solution to a difficult two-arm problem (Fig.~\ref{fig:2armHerb}) and found better solutions on easier one-arm problems (Fig.~\ref{fig:1armHerb}).
The only planner tested that found solutions faster was \acs{RRT}-Connect, which does not converge towards the optimum.

The remainder of this paper is organized as follows.
Section~\ref{sec:back} presents further background and Section~\ref{sec:bit} presents a description of the algorithm.
Section~\ref{sec:anal} presents an initial theoretical analysis of \ac{BITstar}, while Section \ref{sec:exp} presents the experimental results in detail.
Finally, Section~\ref{sec:disc} presents a discussion on the algorithm and related future work and Section~\ref{sec:conc} provides a conclusion.

\begin{figure}[t]
    \centering
    \includegraphics[page=1,width=\columnwidth]{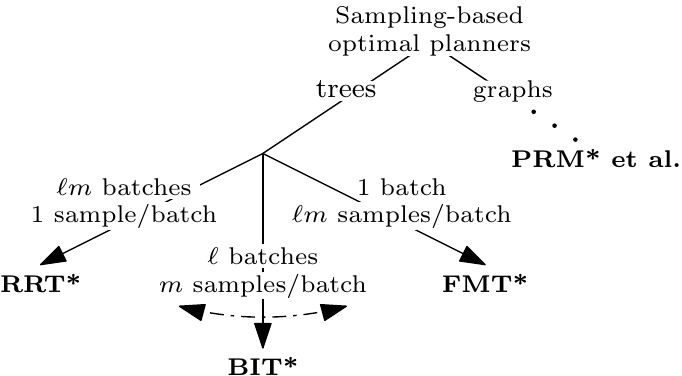}
    \caption{A simplified taxonomy of sampling-based optimal planners demonstrating the relationship between \acs{RRTstar}, \acs{FMTstar}, and \acs{BITstar}.}
    \label{fig:taxonomy}
\end{figure}%
\begin{figure*}[t]
    \centering
    \includegraphics[width=\textwidth]{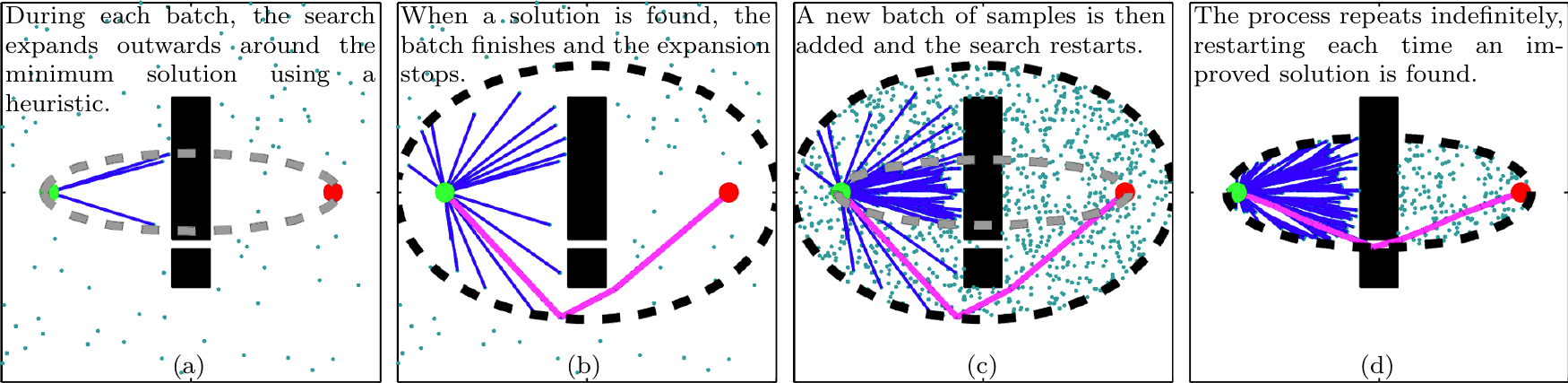}
    \caption{An illustration of the informed search procedure used by \acs{BITstar}.
        The start and goal states are shown as green and red, respectively.
        The current solution is highlighted in magenta.
        The subproblem that contains any better solutions is shown as a black dashed line, while the progress of the current batch is shown as a grey dashed line.
        Fig.~(a) shows the growing search of the first batch of samples, and (b) shows the first search ending when a solution is found.
        After pruning and adding a second batch of samples, Fig.~(c) shows the search restarting on a denser graph while (d) shows the second search ending when an improved solution is found.
        An animated illustration is available in the attached video.
        \fullFigGap}
    \label{fig:cartoon}
\end{figure*}%
\section{Background}\label{sec:back}
We define the optimal planning problem similarly to \cite{karaman_ijrr11}.
\begin{pdef}[Optimal Planning]\label{pdef:opt}
    Let $\stateSet \subseteq \mathbb{R}^n$ be the state space of the planning problem, $\obsSet \subset \stateSet$ be the states in collision with obstacles, and $\freeSet = \stateSet \setminus \obsSet$ be the resulting set of permissible states.
    Let $\xstart \in \freeSet$ be the initial state and $\goalSet \subset \freeSet$ be the set of desired final states.
    Let $\pathSeq : \; \left[0,1\right] \mapsto \stateSet$ be a sequence of states (a path) and $\pathSet$ be the set of all nontrivial paths.

    The optimal solution is the path, $\bestPath$, that minimizes a chosen cost function, $\pathCost : \; \pathSet \mapsto \mathbb{R}_{\geq0}$, while connecting $\xstart$ to any $\xgoal \in \goalSet$ through free space,
    \begin{align*}
        \mathStyle
        \bestPath = \argmin\limits_{\pathSeq \in \pathSet} \left\lbrace \pathCost \left(\pathSeq\right) \;\; \middle| \;\; \right. & \pathSeq(0) = \xstart,\, \pathSeq(1) \in \xgoal,\\
        & \quad \left. \forall \interp \in \left[ 0,1 \right],\, \pathSeq\left(\interp\right) \in \freeSet \right\rbrace,
    \end{align*}%
    where $\mathbb{R}_{\geq0}$ is the set of non-negative real numbers.
    We denote the cost of this optimal path as $\bestPathCost$.\hfill\QEDopen
\end{pdef}%

A discrete set of states in this state space, $\sampleSet \subset \stateSet$, can be viewed as a graph whose edges are given algorithmically by a transition function (an \emph{implicit} graph).
When these states are sampled randomly, $\sampleSet = \set{\statex \sim \uniform{\stateSet}}$, the properties of the graph can be described by a probabilistic model known as a \ac{RGG} \cite{penrose_03}.

In an \ac{RGG}, the connections (edges) between states (vertices) depend on their relative geometric position.
Common \acp{RGG} have edges to a specific number of each state's nearest neighbours (a $k$-nearest graph \cite{xue_wire04}) or to all neighbours within a specific distance (an $r$-disc graph \cite{gilbert_siam61}).
\ac{RGG} theory provides probabilistic relationships between the number and distribution of samples, the $k$ or $r$ defining the graph, and specific graph properties such as connectivity or relative cost through the graph \cite{penrose_03,muthukrishnan_siam05,karaman_ijrr11,janson_isrr13}.

Sampling-based planners can therefore be viewed as algorithms to construct an implicit \ac{RGG} and an explicit spanning tree in the free space of the planning problem.
Much like graph-search techniques, the performance of an algorithm will depend on the quality of the \ac{RGG} representation and the efficiency of the search.

Karaman and Frazzoli \cite{karaman_ijrr11} use \ac{RGG} theory in \ac{RRTstar} to limit graph complexity while maintaining probabilistic bounds on the representation, but the graph is constructed and searched simultaneously, resulting in a randomly ordered anytime search.
Janson and Pavone \cite{janson_isrr13} similarly use \ac{RGG} theory in \ac{FMTstar}, but for a constant number of samples, resulting in an ordered but nonanytime (in solution or resolution) search.
Recently, Salzman and Halperin \cite{salzman_icra15} have given \ac{FMTstar} quasi-anytime performance by independently solving increasingly dense \acp{RGG} in their \ac{MPLB} algorithm.
Heuristics order and focus the search, but solutions are only returned when an \ac{RGG} is completely searched.

\vspace{0.5ex}
In contrast, \ac{BITstar} uses \emph{incremental} search techniques on increasingly dense \acp{RGG}.
This balances the benefits of heuristically ordered search with anytime performance and asymptotic optimality.
The tuning parameters are the choice of the heuristic, an \ac{RGG} constant, and the number of samples per batch.
\ac{BITstar} can be viewed as an extension of \ac{LPAstar} \cite{koenig_ai04} to continuous problems and as a generalization of existing sampling-based optimal planners (Fig.~\ref{fig:taxonomy}).
With batches of one sample, it is a version of Informed \acs{RRTstar} \cite{gammell_iros14}, and with a single batch and the zero heuristic, a version of \ac{FMTstar}.

\section{Batch Informed Trees (\acs{BITstar})}\label{sec:bit}
Informally, \ac{BITstar} works as follows.
An initial \ac{RGG} with \emph{implicit} edges is defined by uniformly distributed random samples from the free space and the start and goal.
The \ac{RGG} parameter ($r$ or $k$) is chosen to reduce graph complexity while maintaining asymptotic optimality requirements as a function of the number of samples \cite{karaman_ijrr11,janson_isrr13}.
An \emph{explicit} tree is then built outwards from the start towards the goal by a heuristic search (Fig.~\ref{fig:cartoon}a).
This tree includes only collision-free edges and its construction stops when a solution is found or it can no longer be expanded (Fig.~\ref{fig:cartoon}b).
This concludes a \emph{batch}.

To start a new batch, a denser implicit \ac{RGG} is constructed by adding more samples and updating $r$ (or $k$).
If a solution has been found, these samples are limited to the subproblem that could contain a better solution (e.g., an ellipse for path length \cite{gammell_iros14}).
The tree is then updated using \acs{LPAstar}-style incremental search techniques that reuse existing information (Fig.~\ref{fig:cartoon}c).
As before, the construction of the tree stops when the solution cannot be improved or when there are no more collision-free edges to traverse (Fig.~\ref{fig:cartoon}d).
The process continues with new batches as time allows.

\subsection{Notation}\label{sec:bit:note}
The functions $\gBelow{\statex}$ and $\hBelow{\statex}$ represent admissible estimates of the cost-to-come to a state, $\statex \in \stateSet$, from the start and the cost-to-go from a state to the goal, respectively (i.e., they bound the true costs from below).
The function, $\fBelow{\statex}$, represents an admissible estimate of the cost of a path from $\xstart$ to $\goalSet$ constrained to pass through $\statex$, i.e., $\fBelow{\statex} := \gBelow{\statex} + \hBelow{\statex}$.
This estimate defines a subset of states, $\fBelowSet := \setst{\statex\in\stateSet}{\fBelow{\statex} \leq \cbest}$, that could provide a solution better than the current best solution cost, $\cbest$.

Let $\treeGraph := \pair{\vertexSet}{\edgeSet}$ be an \emph{explicit} tree with a set of vertices, $\vertexSet\subset\freeSet$, and edges, $\edgeSet = \set{\pair{\statev}{\statew}}$ for some $\statev,\, \statew \in \vertexSet$.
The function $\gAbove{\statex}$ represents the cost-to-come to a state $\statex \in \stateSet$ from the start vertex given the current tree, $\treeGraph$.
We assume a state not in the tree, or otherwise unreachable from the start, has a cost-to-come of infinity.
It is important to recognize that these two functions will always bound the unknown true optimal cost to a state, $\gTrue{\cdot}$, i.e., $\forall \statex \in \stateSet,\,\gBelow{\statex} \leq \gTrue{\statex} \leq \gAbove{\statex}$.

The functions $\cBelow{\statex}{\statey}$ and $\cTrue{\statex}{\statey}$ represent an admissible estimate of the cost of an edge and the true cost of an edge between states $\statex,\, \statey \in \stateSet$, respectively.
We assume that edges that intersect the obstacle set have a cost of infinity, and therefore $\forall \statex,\, \statey \in \stateSet,\,\, \cBelow{\statex}{\statey} \leq  \cTrue{\statex}{\statey} \leq \infty$.
It is important to recognize that calculating $\cTrue{\statex}{\statey}$ can be expensive (e.g., collision detection, differential constraints, etc.) and using a heuristic estimate for edge cost has the effect of delaying this calculation until necessary.

The function $\lebesgue{\cdot}$ represents the Lebesgue measure of a set (e.g., the \textit{volume}), and $\unitBall$ represent the Lebesgue measure of an $n$-dimensional unit ball.
The cardinality of a set is denoted by $\card{\cdot}$.
We use the notation $\stateSet \setInsert \set{\statex} $ and $\stateSet \setRemove \set{\statex}$ to compactly represent the compounding operations $\stateSet \gets \stateSet \cup \set{\statex}$ and $\stateSet \gets \stateSet \setminus \set{\statex}$, respectively.
As is customary, we take the minimum of an empty set to be infinity.

\begin{algorithm}[t]
    \caption{\acs{BITstar}\algorithmStyle$\left(\xstart \in \freeSet, \xgoal \in \goalSet\right)$}\label{algo:bitstar}
    \algorithmStyle
    $\vertexSet \gets \set{\xstart};\;$
    $\edgeSet \gets \emptyset;\;$
    $\sampleSet \gets \set{\xgoal}$\label{algo:bitstar:initStart}\;
    $\edgeQueue \gets \emptyset;\;$
    $\vertexQueue \gets \emptyset;\;$
    $r \gets \infty$\label{algo:bitstar:initEnd}\;
    \Repeat{$\mathtt{STOP}$}
    {
        \If{$\edgeQueue \equiv \emptyset \; \mathbf{and} \; \vertexQueue \equiv \emptyset$\label{algo:bitstar:startBatch}}
        {
            $\mathtt{Prune}\left(\gAbove{\xgoal}\right)$\label{algo:bitstar:prune}\;
            $\sampleSet \setInsert \mathtt{Sample}\left(\samplesPerBatch, \gAbove{\xgoal}\right)$\label{algo:bitstar:addSamples}\;
            $\oldVertices \gets \vertexSet$\label{algo:bitstar:oldVertices}\;
            $\vertexQueue \gets \vertexSet$\label{algo:bitstar:resetVertexQueue}\;
            $r \gets \mathtt{radius}\left(\card{\vertexSet} + \card{\sampleSet}\right)$\label{algo:bitstar:r}\label{algo:bitstar:endBatch}\;
        }
        
        \While{$\mathtt{BestQueueValue}\left(\vertexQueue\right)\leq\mathtt{BestQueueValue}\left(\edgeQueue\right)$\label{algo:bitstar:startExpand}}
        {
            $\mathtt{ExpandVertex}\left( \mathtt{BestInQueue}\left(\vertexQueue\right) \right)$\label{algo:bitstar:updateEdgeQueue}\label{algo:bitstar:endExpand}\;
        }

        $\bestEdge \gets \mathtt{BestInQueue}\left(\edgeQueue\right)$\label{algo:bitstar:popStart}\;
        $\edgeQueue \setRemove \set{\bestEdge}$\label{algo:bitstar:popEnd}\;
        \If{$\gAbove{\vmin}+\cBelow{\vmin}{\xmin}+\hBelow{\xmin}<\gAbove{\xgoal}$\label{algo:bitstar:canBeBetterSoln}\label{algo:bitstar:startProcess}}
        {
            \If{$\gBelow{\vmin}+\cTrue{\vmin}{\xmin}+\hBelow{\xmin}<\gAbove{\xgoal}$\label{algo:bitstar:canRealEdgeEverBeBetterSoln}}
            {
                \If{$\gAbove{\vmin}+\cTrue{\vmin}{\xmin}<\gAbove{\xmin}$\label{algo:bitstar:improvesGraph}}
                {
                    \If{$\xmin \in \vertexSet$\label{algo:bitstar:isInTree}}
                    {
                        $\edgeSet \setRemove \set{\pair{\statev}{\xmin}\in\edgeSet}$\label{algo:bitstar:rmParent}\;
                    }
                    \Else
                    {
                        $\sampleSet \setRemove \set{\xmin}$\label{algo:bitstar:rmSample}\;
                        $\vertexSet \setInsert \set{\xmin};\;$
                        $\vertexQueue \setInsert \set{\xmin}$\label{algo:bitstar:addVertex}\;
                    }
                    $\edgeSet \setInsert \set{\bestEdge}$\label{algo:bitstar:addEdge}\;
                    $\edgeQueue \setRemove \left\lbrace\pair{\statev}{\xmin}\in\edgeQueue \;\;\middle|\vphantom{\gAbove{\statev} + \cBelow{\statev}{\xmin} \geq \gAbove{\xmin}}\right.$\label{algo:bitstar:pruneEdges}\\
                    \skipln$\qquad\qquad\qquad\; \left. \vphantom{\pair{\statev\in\vertexSet}{\xmin}\in\edgeQueue}\gAbove{\statev} + \cBelow{\statev}{\xmin} \geq \gAbove{\xmin}\right\rbrace$\;
                }
            }
        }
        \Else
        {
            $\edgeQueue \gets \emptyset;\;$
            $\vertexQueue \gets \emptyset$\label{algo:bitstar:clearQueue}\label{algo:bitstar:endProcess}\;
        }
    }
    \Return{$\treeGraph$}\;
\end{algorithm}%

\subsection{Algorithm}\label{sec:bit:detail}
\ac{BITstar} is presented in Algs.~\ref{algo:bitstar}--\ref{algo:prune}.
For simplicity, we limit our discussion to a search from the start to a single goal state using an $r$-disc \ac{RGG}, but the formulation is similar for searches from a goal state, with a goal set, or with a $k$-nearest \ac{RGG}.
The algorithm starts with a given initial state, $\xstart$, in the tree, $\treeGraph$, and the goal state, $\xgoal$, in the set of unconnected samples, $\sampleSet$ (\algoline{algo:bitstar}{algo:bitstar:initStart}).
The tree is grown towards $\xgoal$ from $\xstart$ by processing a queue of \ac{RGG} edges, $\edgeQueue$.
This edge queue is populated by a vertex expansion queue, $\vertexQueue$ (\algoline{algo:bitstar}{algo:bitstar:initEnd}).

\subsubsection{Batch creation (\algolines{algo:bitstar}{algo:bitstar:startBatch}{algo:bitstar:endBatch})}\label{sec:bit:detail:init}
A new batch begins when the queues are empty.
The samples and spanning tree are pruned of states that cannot improve the solution (\algoline{algo:bitstar}{algo:bitstar:prune}; \algo{algo:prune}).
A new set of $\samplesPerBatch$ samples is then added to the \ac{RGG} from the subproblem containing a better solution (\algoline{algo:bitstar}{algo:bitstar:addSamples}).
This can be accomplished by rejection sampling or, for some cost functions, direct sampling \cite{gammell_iros14}.
The vertices in the tree are labelled so that only connections to new states will be considered (\algoline{algo:bitstar}{algo:bitstar:oldVertices}) and requeued for expansion (\algoline{algo:bitstar}{algo:bitstar:resetVertexQueue}).
The radius of the underlying $r$-disc \ac{RGG} is updated to reflect its size, $q$, (\algoline{algo:bitstar}{algo:bitstar:r}),
\begin{align}\label{eqn:r}
    \mathStyle
        \mathtt{radius}\left(\totalSamples\right) := 2 \eta \left(1 + \frac{1}{n}\right)^{\frac{1}{n}} \left(\frac{\lebesgue{\fBelowSet}}{\unitBall}\right)^{\frac{1}{n}} \left(\frac{\log\left(\totalSamples\right)}{\totalSamples}\right)^{\frac{1}{n}},
\end{align}%
where $\eta \geq 1$ is a tuning parameter \cite{karaman_ijrr11}.

\begin{algorithm}[t]
    \caption{ExpandVertex\algorithmStyle$\left(\statev \in \vertexQueue \subseteq \vertexSet\right)$}\label{algo:expandVertex}
    \algorithmStyle
    $\vertexQueue \setRemove \set{\statev}$\label{algo:expandVertex:rmVertexQueue}\;
    $\nearSet \gets \setst{\statex \in \sampleSet}{\norm{\statex - \statev }{2} \leq r}$\label{algo:expandVertex:nearSamples}\;
    $\edgeQueue \setInsert \left\lbrace \pair{\statev}{\statex} \in \vertexSet\times \nearSet \;\; \middle| \;\; \vphantom{\gBelow{\statev} + \cBelow{\statev}{\statex} + \hBelow{\statex} < \gAbove{\xgoal}}\right.$\label{algo:expandVertex:insert}\\
    \skipln$\qquad\qquad\qquad\qquad\quad\left.\vphantom{\edgeQueue \setInsert \pair{\statev \in \vertexSet}{\statex \in \nearSet}} \gBelow{\statev} + \cBelow{\statev}{\statex} + \hBelow{\statex} < \gAbove{\xgoal} \right\rbrace$\;
    
    \If{$\statev\not\in\oldVertices$\label{algo:expandVertex:isNew}}
    {
        $\nearVertices \gets \setst{\statew \in \vertexSet}{\norm{\statew - \statev }{2} \leq r}$\label{algo:expandVertex:nearVertices}\;
        $\edgeQueue \setInsert \left\lbrace\pair{\statev}{\statew} \in \vertexSet \times \nearVertices \;\; \big| \;\; \pair{\statev}{\statew} \not\in \edgeSet, \vphantom{\gBelow{\statev} + \cBelow{\statev}{\statew} + \hBelow{\statew} < \gAbove{\xgoal},\gAbove{\statev} + \cBelow{\statev}{\statew} + < \gAbove{\statew}}\right.$\label{algo:expandVertex:insertRewire}\\
        \skipln$\qquad\qquad\qquad\qquad\gBelow{\statev} + \cBelow{\statev}{\statew} + \hBelow{\statew} < \gAbove{\xgoal},$\\
        \skipln$\qquad\qquad\qquad\qquad\qquad\qquad\left.\vphantom{\edgeQueue \setInsert \pair{\statev \in \vertexSet}{\statew \in \nearVertices} \pair{\statev}{\statew} \not\in \edgeSet,\;\gBelow{\statev} + \cBelow{\statev}{\statew} + \hBelow{\statew} < \gAbove{\xgoal}}\gAbove{\statev} + \cBelow{\statev}{\statew} < \gAbove{\statew} \right\rbrace$\;
    }
\end{algorithm}%
\begin{algorithm}[t]
    \caption{Prune\algorithmStyle$\left(\cost\in\mathbb\mathbb{R}_{\geq0}\right)$}\label{algo:prune}
    \algorithmStyle
    $\sampleSet \setRemove \setst{\statex\in\sampleSet}{\fBelow{\statex}\geq\cost}$\label{algo:prune:pruneSamples}\;
    $\vertexSet \setRemove \setst{\statev\in\vertexSet}{\fBelow{\statev} > \cost}$\label{algo:prune:rmBadVertices}\;
    $\edgeSet \setRemove \setst{\pair{\statev}{\statew}\in\edgeSet}{\fBelow{\statev} > \cost,\;\mathtt{or}\;\fBelow{\statew} > \cost}$\label{algo:prune:rmBadEdges}\;
    $\sampleSet \setInsert \setst{\statev\in\vertexSet}{\gAbove{\statev} \equiv \infty}$\label{algo:prune:reuseVertices}\;
    $\vertexSet \setRemove \setst{\statev\in\vertexSet}{\gAbove{\statev} \equiv \infty}$\label{algo:prune:rmOrphanedVertices}\;
\end{algorithm}%
\begin{figure*}[t]
    \centering
    \includegraphics[width=\textwidth]{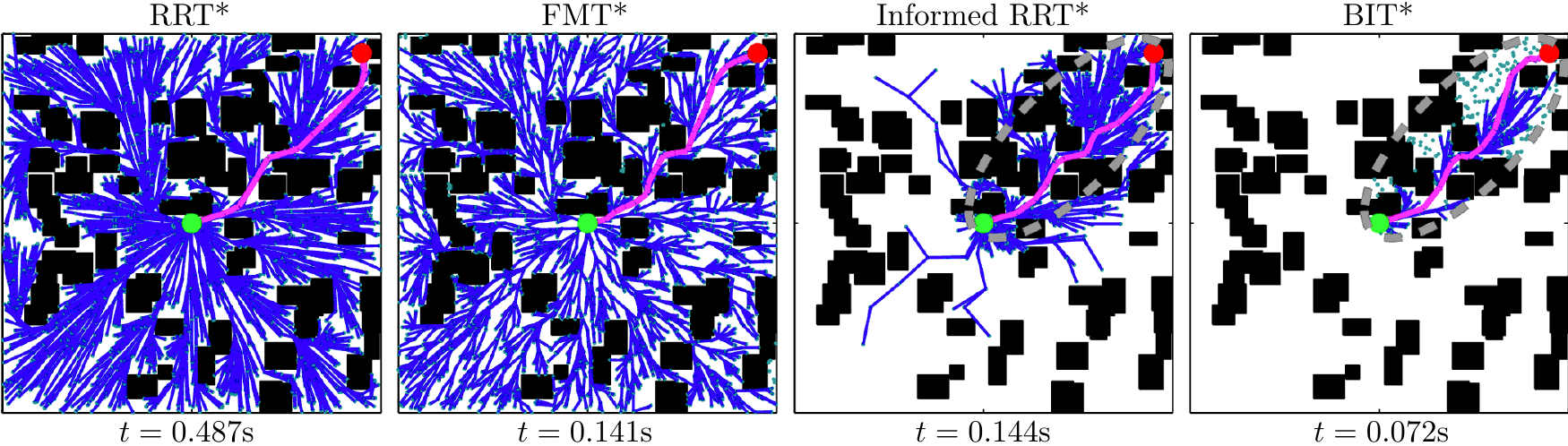}
    \caption{An example of \acs{RRTstar}, Informed \acs{RRTstar}, \ac{FMTstar} ($\samplesPerBatch = 2500$), and \acs{BITstar} run on a random $\mathbb{R}^2$ world.
    Each algorithm was run until it found a equivalent solution to \ac{FMTstar} ($c = 1.39$) regardless of homotopy class.
    \acs{BITstar}'s use of heuristics allows it to find such a solution faster ($t = 0.072$s) than \ac{RRTstar} ($t = 0.487$s), \ac{FMTstar} ($t = 0.141$s) and Informed \acs{RRTstar} ($t = 0.144$s) by performing its search in a principled manner that initially investigates low-cost solutions and focuses the search for improvements.
    Animated results are available in the attached video.
    \fullFigGap\vspace{-1ex}} %
    \label{fig:randComparison}
\end{figure*}%

\subsubsection{Edge selection (\algolines{algo:bitstar}{algo:bitstar:startExpand}{algo:bitstar:popEnd})} \label{sec:bit:detail:getEdge}
The tree is built by processing the queue of edges, $\edgeQueue$, in order of increasing estimated cost of a solution constrained to pass through the edge, $\pair{\statev}{\statex}$, given the current tree, $\gAbove{\statev} + \cBelow{\statev}{\statex} + \hBelow{\statex}$.
Ties are broken in favour of the edge with the lowest current cost-to-come to the source vertex, $\gAbove{\statev}$.
The function $\mathtt{BestInQueue}\left(\edgeQueue\right)$ returns the best edge in the queue given this ordering.
The function $\mathtt{BestQueueValue}\left(\edgeQueue\right)$ returns the estimated solution cost of the best edge in the queue.

The cost of creating the edge queue is delayed by using a vertex expansion queue, $\vertexQueue$.
This vertex queue is ordered on the estimated cost of a solution constrained to pass through the vertex given the current tree, $\gAbove{\statev} + \hBelow{\statev}$.
This value is a lower bound estimate of the edge-queue values from a vertex; therefore, vertices only need to be expanded into the edge queue when their vertex-queue value is less than the best edge-queue value.
The function $\mathtt{BestInQueue}\left(\vertexQueue\right)$ returns the best vertex in the vertex queue given this ordering.
The function $\mathtt{BestQueueValue}\left(\vertexQueue\right)$ returns the estimated solution cost of the best vertex in the queue.

Before selecting the next edge in the queue to process, any vertices that could have a better outgoing edge (\algoline{algo:bitstar}{algo:bitstar:startExpand}) are expanded (\algoline{algo:bitstar}{algo:bitstar:updateEdgeQueue}; \algo{algo:expandVertex}).
The best edge in the queue, $\bestEdge$, is then removed for processing (\algolines{algo:bitstar}{algo:bitstar:popStart}{algo:bitstar:popEnd}).
As edges are only added to the edge queue by expanding their source vertex, and each vertex is only expanded once per batch, each edge is guaranteed to only be processed once per batch.

\subsubsection{Edge processing (\algolines{algo:bitstar}{algo:bitstar:startProcess}{algo:bitstar:endProcess})} \label{sec:bit:addEdge}
Heuristics are used to accelerate the processing of edges and delay the calculation of the true edge cost.
The edge being processed, $\bestEdge$, is first checked to see if it can improve the current solution given the current tree (\algoline{algo:bitstar}{algo:bitstar:canBeBetterSoln}).
If it cannot, then by construction no other edges in the queue can and both queues are cleared to start a new batch (\algoline{algo:bitstar}{algo:bitstar:clearQueue}).

The true edge cost is then calculated by performing collision checks and solving any differential constraints.
This may be expensive, so the edge is processed if it could \emph{ever} improve the current solution, regardless of the current state of the tree (\algoline{algo:bitstar}{algo:bitstar:canRealEdgeEverBeBetterSoln}).
If it cannot, than it is discarded.

Finally, the edge is checked to see if it improves the cost-to-come of its target vertex (\algoline{algo:bitstar}{algo:bitstar:improvesGraph}), noting that disconnected vertices have an infinite cost.
If it does, it is added to the tree.

If the target vertex, $\xmin$, is in the tree (\algoline{algo:bitstar}{algo:bitstar:isInTree}), then the edge represents a \emph{rewiring}, otherwise it is an \emph{expansion}.
Rewirings require removing the edge to the target vertex from the tree (\algoline{algo:bitstar}{algo:bitstar:rmParent}).
Expansions require moving the target vertex from the set of unconnected samples to the set of vertices and queueing it for expansion (\algolines{algo:bitstar}{algo:bitstar:rmSample}{algo:bitstar:addVertex}).

The new edge is then added to the tree (\algoline{algo:bitstar}{algo:bitstar:addEdge}) and the edge queue is pruned to remove edges that cannot improve the cost-to-come of the vertex (\algoline{algo:bitstar}{algo:bitstar:pruneEdges}).

\subsubsection{Vertex Expansion (\algo{algo:expandVertex})}\label{sec:bit:detail:expandVertex}
The function, $\mathtt{Expand}$-$\mathtt{Vertex}\left(\statev\right)$, removes a vertex, $\statev\in\vertexQueue\subseteq\vertexSet$, from the vertex queue (\algoline{algo:expandVertex}{algo:expandVertex:rmVertexQueue}) and adds outgoing edges from the vertex to the edge queue.

In the \ac{RGG}, a vertex is connected to all states within a radius, $r$.
Edges to unconnected states (\algoline{algo:expandVertex}{algo:expandVertex:nearSamples}) are always added to edge queue if they could be part of a better solution (\algoline{algo:expandVertex}{algo:expandVertex:insert}).
Edges to connected states are only added if the source vertex was added to the tree during this batch (\algoline{algo:expandVertex}{algo:expandVertex:isNew}).
This prevents repeatedly checking edges between vertices in the tree.
These rewiring edges (\algoline{algo:expandVertex}{algo:expandVertex:nearVertices}) are added to the edge queue if, in addition to possibly providing a better solution, they are not already in the tree and could improve the path to the target vertex given the current tree (\algoline{algo:expandVertex}{algo:expandVertex:insertRewire}).

\subsubsection{Graph Pruning (\algo{algo:prune})}\label{sec:bit:detail:prune}
The function, $\mathtt{Prune}\left(\cost\right)$, removes states that cannot provide a solution better than the given cost, $\cost\in\mathbb{R}_{\geq0}$.
Unconnected samples are removed (\algoline{algo:prune}{algo:prune:pruneSamples}), while vertices in the tree are removed and disconnected (\algolines{algo:prune}{algo:prune:rmBadVertices}{algo:prune:rmBadEdges}).
To maintain uniform sample density in the subproblem being searched, disconnected descendents that could still provide a better solution are returned to the unconnected sample set (\algolines{algo:prune}{algo:prune:reuseVertices}{algo:prune:rmOrphanedVertices}).

\subsection{Practical Considerations}\label{sec:bit:pract}
\vspace{-0.25ex}
Algs.~\ref{algo:bitstar}--\ref{algo:prune} describe \ac{BITstar} without considering implementation, leaving room for practical improvements.
Pruning (\algoline{algo:bitstar}{algo:bitstar:prune}) is expensive and should only occur when a new solution has been found.
It can even be limited to \emph{significant} changes in solution cost without altering behaviour.

Searches (e.g., \algoline{algo:bitstar}{algo:bitstar:rmParent}; \algoline{algo:expandVertex}{algo:expandVertex:nearSamples}; \algoline{algo:prune}{algo:prune:rmBadEdges}; etc.) can be implemented efficiently with appropriate datastructures, e.g., $k$-d trees or indexed containers, that do not require an exhaustive global search.

Ordered containers provide an efficient edge queue (\algolines{algo:bitstar}{algo:bitstar:popStart}{algo:bitstar:popEnd}).
While rewirings will change the order of some elements, we found little experimental difference between an approximately sorted and a strictly sorted queue.

\begin{figure*}[t]
    \centering
    \includegraphics[width=\textwidth]{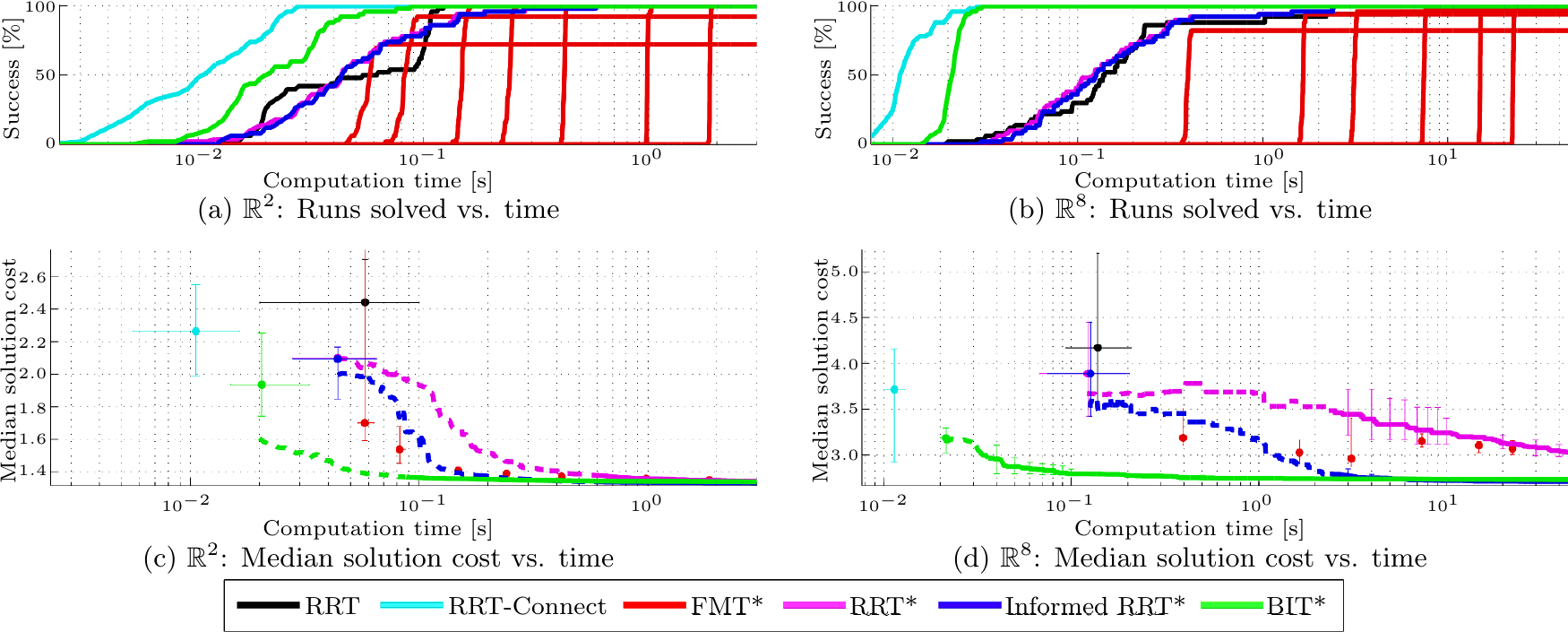}
    \caption{The results from representative worlds in $\mathbb{R}^2$ and $\mathbb{R}^8$ for \acs{RRT}, \acs{RRT}-Connect, \acs{RRTstar}, Informed \acs{RRTstar}, \ac{BITstar} with a batch size of $100$ samples, and \ac{FMTstar} of various sample sizes ($\mathbb{R}^2$: $500$, $1000$, $2500$, $5000$, $10000$, $25000$, and $50000$; $\mathbb{R}^8$: $100$, $500$, $1000$, $2500$, $5000$, and $7500$).
    For the chosen random worlds, (a) and (b) show the  percentage of trials solved versus run time for the $50$ different trials, while (c) and (d) show the median solution cost versus run time.
    Dots represent the median initial solution.
    For algorithms that asymptotically converge towards the optimum, the dashed lines represent a median calculated from $50\%$--$100\%$ success rate and may increase as new trials are included.
    The solid lines represent the median when all trials have a solution, with error bars denote a non-parametric $95\%$ confidence interval on median solution cost and time.
    Note that for some algorithms the confidence intervals are smaller than the median line and are not visible and that \acs{RRT} and \acs{RRT}-Connect are not asymptotically optimal planners.
    \fullFigGap\vspace{-0.95ex}}
    \label{fig:runHistories}
\end{figure*}%
\section{Analysis}\label{sec:anal}
For brevity, we only present a proof of almost sure asymptotic optimality (Theorem~\ref{thm:ao}) and note that this implies probabilistic completeness.
We also present a discussion on the relationship between \ac{BITstar}'s edge queue and \ac{LPAstar}'s vertex queue (Remark~\ref{rem:lpa}).

\begin{thm}[Asymptotic Optimality]\label{thm:ao}
    \ac{BITstar} asymptotically converges \emph{almost surely} to the optimal solution to Prob.~\ref{pdef:opt}, if a solution exists, as the total number of samples, $\totalSamples$, goes to infinity, i.e.,%
    \setlength \abovedisplayskip{-1.0ex plus0pt minus1pt}%
    \setlength \belowdisplayskip{0.5ex plus0pt minus1pt}%
    \begin{align*}
        \mathStyle
        \prob{\limsup\limits_{\totalSamples\to\infty}\cbestfinal^{\rm \scriptscriptstyle BIT*} = \bestPathCost} = 1,
    \end{align*}%
    \setlength \abovedisplayskip{1ex plus0pt minus1pt}%
    \setlength \belowdisplayskip{1ex plus0pt minus1pt}%
    where $\cbestfinal^{\rm \scriptscriptstyle BIT*}$ is the cost of the best solution found by \ac{BITstar} from $\totalSamples$ samples.
\end{thm}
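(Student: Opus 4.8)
The plan is to establish asymptotic optimality in two layers: a \emph{graph-theoretic} layer guaranteeing that the implicit \ac{RGG} almost surely contains an increasingly good path, and an \emph{algorithmic} layer guaranteeing that \ac{BITstar} actually finds the best path available in that graph during each batch. First I would invoke the \ac{RGG} convergence results of Karaman and Frazzoli \cite{karaman_ijrr11} and of Janson and Pavone \cite{janson_isrr13}: for $\totalSamples$ uniformly distributed samples connected with the radius of \eqref{eqn:r}, the resulting $r$-disc graph almost surely admits a start--goal path whose cost converges to $\bestPathCost$ as $\totalSamples \to \infty$. Because the radius in \eqref{eqn:r} scales with $\lebesgue{\fBelowSet}^{1/n}$, this density guarantee continues to hold within the shrinking informed subset $\fBelowSet$, exactly as in Informed \ac{RRTstar} \cite{gammell_iros14}.

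Next I would argue that within any single batch \ac{BITstar} returns the minimum-cost path through the current collision-free \ac{RGG} restricted to $\fBelowSet$. The key is that the edge queue is ordered by $\gAbove{\statev} + \cBelow{\statev}{\statex} + \hBelow{\statex}$, an admissible estimate of the cost of a solution through that edge, so edges are processed in a best-first (A*-like) order. Since $\costFromBelow{\costToCome}$, $\costFromBelow{\costToGo}$, and $\costFromBelow{\edgeCost}$ all bound the true quantities from below, the termination test on \algoline{algo:bitstar}{algo:bitstar:canBeBetterSoln} triggers the queue clearing only once no remaining edge can lower $\gAbove{\xgoal}$; this is precisely the optimality certificate of A* under an admissible heuristic. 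The lazy evaluation of true edge costs and the incremental reuse of the previous tree affect efficiency but not the final solution produced, because every edge that could improve a vertex's cost-to-come is eventually queued and examined.

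Then I would show that pruning and informed sampling never discard the convergent path. Since $\fBelow{\cdot}$ is admissible, every state $\statex$ on an optimal path satisfies $\fBelow{\statex} \leq \bestPathCost$, and since any achieved solution cost satisfies $\cbest \geq \bestPathCost$, the $\mathtt{Prune}$ routine (\algo{algo:prune}), which removes only states with $\fBelow{\cdot} \geq \cbest$, retains these states whenever $\cbest > \bestPathCost$. Consequently $\cbestfinal$ is monotonically non-increasing in $\totalSamples$, bounded below by $\bestPathCost$, and---by the first two layers---bounded above by the best \ac{RGG}-path cost, which converges almost surely to $\bestPathCost$. A squeeze argument then yields $\lim_{\totalSamples\to\infty}\cbestfinal = \bestPathCost$ with probability one.

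The main obstacle I expect is the algorithmic layer: rigorously verifying that \ac{BITstar}'s batch-wise, lazy, incrementally reused search returns the same solution as a complete best-first search of the current graph. In particular, one must confirm that expanding only newly added vertices (\algoline{algo:expandVertex}{algo:expandVertex:isNew}) together with the edge-queue pruning (\algoline{algo:bitstar}{algo:bitstar:pruneEdges}) never skips a rewiring that a full search would exploit, and that the coupled vertex- and edge-queue ordering does not terminate prematurely. Establishing this search-completeness invariant---essentially an \ac{LPAstar}-style correctness argument adapted to the continuous, batched setting---is the crux; the \ac{RGG} convergence and the pruning-safety arguments are comparatively standard.
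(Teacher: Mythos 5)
Your first and third layers---the \ac{RGG} almost surely containing an asymptotically optimal path, and pruning safety via admissibility of $\fBelow{\cdot}$---are sound, but your middle layer asserts something that \ac{BITstar} does not in fact guarantee: that each batch terminates with the minimum-cost path of the current collision-free \ac{RGG}. \ac{BITstar} deliberately omits the very mechanism of \ac{LPAstar} that would make such a claim provable. When an edge rewires a vertex $\xmin$ that is already in the tree, $\xmin$ is \emph{not} re-inserted into the vertex queue (requeuing occurs only in the expansion branch, \algoline{algo:bitstar}{algo:bitstar:addVertex}), so rewiring edges out of $\xmin$ that failed the insertion test of \algoline{algo:expandVertex}{algo:expandVertex:insertRewire} under its older, larger cost-to-come are never reconsidered within that batch; and in later batches, vertices in $\oldVertices$ queue only edges to unconnected samples (\algoline{algo:expandVertex}{algo:expandVertex:isNew}), never rewirings among existing tree vertices. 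Rewiring cascades that a complete best-first or \ac{LPAstar} search would perform are therefore skipped, and the tree at the end of a batch need not be a shortest-path tree of the graph that was searched. The paper's discussion states this design choice explicitly: propagating cost-to-come improvements to descendants is dropped precisely because the \ac{RRTstar} analysis shows it is unnecessary for almost-sure asymptotic optimality. So the ``search-completeness invariant'' you identify as the crux is not merely hard to verify---it does not hold in general, and an \ac{LPAstar}-style correctness argument cannot establish it.

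The paper's actual proof avoids needing any per-batch optimality by a domination argument relative to \ac{RRTstar}: on the same sample sequence with the same connection radius, \ac{BITstar} considers a superset of the edges \ac{RRTstar} considers (all edges shorter than $r_{\totalSamples}$ to samples of the same or earlier batches, versus edges to strictly earlier samples only), it maintains uniform sample density in the informed subset, and \eqref{eqn:r} satisfies the radius requirements of \cite{karaman_ijrr11}; almost-sure convergence is then inherited directly from Appendix G of \cite{karaman_ijrr11}, whose argument already tolerates the absence of rewiring cascades. If you wish to retain your squeeze-argument structure, the upper bound must be weakened from ``the optimal cost in the current \ac{RGG}'' to ``the cost \ac{RRTstar} would achieve on the same samples,'' which is exactly the paper's route.
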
%
\begin{proof}
    The proof extends directly from the work in \cite{karaman_ijrr11}.
    In Appendix G, Karaman and Frazzoli show that for $\totalSamples$ uniformly distributed random samples and a specific \emph{constant} $r_\totalSamples$, the solution found by \ac{RRTstar} almost surely converges asymptotically to the optimal solution as $\totalSamples$ goes to infinity,~i.e.,%
    \setlength \abovedisplayskip{0.7ex plus0pt minus1pt}%
    \setlength \belowdisplayskip{0.7ex plus0pt minus1pt}%
    \begin{align*}
        \mathStyle
        \prob{\limsup\limits_{\totalSamples\to\infty}\cbestfinal^{\rm \scriptscriptstyle RRT*} = \bestPathCost} = 1.
    \end{align*}%
    \setlength \abovedisplayskip{1ex plus0pt minus1pt}%
    \setlength \belowdisplayskip{1ex plus0pt minus1pt}%
    
    \ac{RRTstar} processes the sequence of $q$ samples individually.
    For any sample, it considers all edges involving samples earlier in the sequence that are less than length $r_\totalSamples$.
    \ac{BITstar} processes the sequence of samples in batches.
    For any sample in a batch, it considers all edges involving samples from the same or earlier batches that are less than length $r_\totalSamples$.
    This will contain all the edges considered by \ac{RRTstar} for the same sequence and $r_\totalSamples$.
    As \ac{BITstar} maintains uniform sample density in the subproblem that contains all better solutions and \eqref{eqn:r} meets the requirements for almost sure asymptotic optimality given in \cite{karaman_ijrr11}, \ac{BITstar} is almost surely asymptotically optimal.
\end{proof}%

\begin{rem}[Equivalence to \acs{LPAstar} vertex queue]\label{rem:lpa}
    \ac{BITstar}'s edge queue is an extension of \ac{LPAstar}'s vertex queue \cite{koenig_ai04} to include a heuristic estimate of edge cost.
\end{rem}%
\begin{expl}
    \ac{LPAstar} uses a queue of vertices ordered lexicographically first on the solution cost constrained to go through the vertex and then the cost-to-come to the vertex.
    Both these terms are calculated for a vertex, $\statev \in \vertexSet$, considering all the incoming edges (\textit{rhs-value} in \ac{LPAstar}), i.e.,%
    \setlength \abovedisplayskip{0.6ex plus0pt minus1pt}%
    \setlength \belowdisplayskip{0.6ex plus0pt minus1pt}%
    \begin{align}\label{eqn:rhs}
        \mathStyle
        \min\limits_{\pair{\stateu}{\statev}\in\edgeSet}\left\lbrace\gAbove{\stateu}+\cTrue{\stateu}{\statev}\right\rbrace,
    \end{align}%
    where $\edgeSet$ is the set of edges.%
    \setlength \abovedisplayskip{1ex plus0pt minus1pt}%
    \setlength \belowdisplayskip{1ex plus0pt minus1pt}%

    This minimum requires the calculation of the true edge cost between a vertex and all of its possible parents.
    This calculation is expensive in sampling-based planning (e.g., collision checking, differential constraints, etc.), and reducing its calculation is desirable.
    This can be done by using an admissible heuristic estimate of edge cost and calculating \eqref{eqn:rhs} incrementally.
    A running minimum is calculated by processing edges in order of increasing \emph{estimated} cost.
    The process finishes, and the true minimum is found, when the estimated cost through the next edge is higher than the current value.
    
    \ac{BITstar} combines these individual minima calculations into a single edge queue.
    In doing so, it simultaneously calculates the minimum cost-to-come for each vertex while expanding vertices in order of increasing estimated solution cost.
\end{expl}%

\section{Experimental Results}\label{sec:exp}
\ac{BITstar} was tested against existing algorithms in both simulated random worlds (Section \ref{sec:exp:sim}) and real-world manipulation problems (Section \ref{sec:exp:herb}) using publicly available \ac{OMPL} \cite{ompl} implementations.
All tests and algorithms used an \ac{RGG} constant (e.g., $\eta$ in \eqref{eqn:r}) of $1.1$ and approximated $\lebesgue{\freeSet}$ with $\lebesgue{\stateSet}$.
\ac{RRT}-based algorithms used a goal bias of $5\%$.
\ac{BITstar} used $100$ samples per batch, Euclidean distance between states for heuristics, and direct informed sampling \cite{gammell_iros14}.
Graph pruning was limited to changes in the solution cost greater than $1\%$ and we used an approximately sorted queue.

\subsection{Simulated Random Worlds}\label{sec:exp:sim}

\ac{BITstar} was compared to existing sampling-based algorithms on random problems minimizing path length in $\mathbb{R}^{2}$ and $\mathbb{R}^{8}$.
The problems consisted of a (hyper)cube of width $2$ populated with random axis-aligned (hyper)rectangular obstacles such that at most one third of the environment was obstructed.
The initial state was in the centre of the world and the goal was $(0.9, 0.9, \ldots, 0.9)$ away (Fig.~\ref{fig:randComparison}).
\ac{BITstar} was compared to the \ac{OMPL} implementations of \ac{RRT}, \acs{RRT}-Connect \cite{kuffner_icra00}, \ac{RRTstar}, Informed \acs{RRTstar}, and \ac{FMTstar}.
The \ac{RRT}-based planners used a maximum edge length of $0.2$ and $1.25$ in $\mathbb{R}^2$ and $\mathbb{R}^8$, respectively.
All algorithm parameters were chosen in good faith to maximize performance on a separate training set of random worlds.

For each state dimension, $10$ different random worlds were generated and the planners were tested with $50$ different pseudo-random seeds on each.
The solution cost of each planner was recorded every $1$~millisecond by a separate thread\footnotemark{}.
For each world, median solution cost was calculated for a planner by interpolating each trial at a period of $1$~millisecond.
As the true optima for these problems are different and unknown, there is no meaningful way to compare the results across problems.
Instead, results from a representative problem are presented in Fig.~\ref{fig:runHistories}, where the percent of trials solved and the median solution cost are plotted versus computational time.%
\footnotetext{Simulations were run on a MacBook Pro with $4$~GB of RAM and an Intel i7-620M processor running a $64$-bit version of Ubuntu 12.04.}

These experiments show that in both $\mathbb{R}^2$ (Figs.~\ref{fig:runHistories}a, \ref{fig:runHistories}c) and $\mathbb{R}^8$ (Figs.~\ref{fig:runHistories}b, \ref{fig:runHistories}d), \ac{BITstar} generally finds better solutions faster than other sampling-based optimal planners and \ac{RRT}.
It has a higher likelihood of having found a solution at a given computational time than these planners, and converges faster towards the optimum.
The only planner tested that found solutions faster than \ac{BITstar} was \acs{RRT}-Connect, a nonasymptotically optimal planner.

\subsection{Motion Planning for Manipulation}\label{sec:exp:herb}
To evaluate the performance of \ac{BITstar} on real-world high-dimensional problems, it was tested on \ac{HERB} \cite{herb}.
Experiments consisted of both dual-arm and one-arm planning problems for manipulation with a goal of minimizing the path length through configuration space.
Parameter values for \ac{BITstar} and \ac{RRT}-based planners were chosen from the results of Section~\ref{sec:exp:sim}, and the number of \ac{FMTstar} samples was chosen to use the majority of the available computational time.
Once again, \ac{BITstar} outperformed all planners other than \acs{RRT}-Connect.

For the dual-arm planning problem, \ac{HERB} started with both arms extended under a table from the elbow onward.
The task was to plan a trajectory for both arms to place the hands in position to open a bottle  (Fig.~\ref{fig:2armHerb}).
\acs{HERB}'s proximity to the table and starting position created a narrow passage for the arms around the table.
Coupled with the $14$-\ac{DOF} configuration space, this made for a challenging problem.

Given $2.5$ minutes\footnotemark{} of planning time, \ac{BITstar} was almost twice as likely to find a solution than \ac{RRT}, Informed \acs{RRTstar}, or \ac{FMTstar}.
Over $25$ trials, \acs{BITstar} was $68\%$ successful with a median solution cost of $17.4$.
\ac{RRT}-Connect was $100\%$ successful, but had a median solution cost of $22.1$.
\ac{RRT} was $8\%$ successful with a median solution cost of $31.1$ and Informed \acs{RRTstar} was $8\%$ successful with a median solution cost of $25.3$.
\acs{FMTstar} with $m=500$ was $36\%$ successful with a median solution cost of $17.2$.
All \acs{RRT}-based planners used a maximum edge length of $3$.%
\footnotetext{\acs{HERB} experiments were run on a Dell T3500 with $12$~GB of RAM and an Intel W3565 processor running a $64$-bit version of Ubuntu 12.04.}

An easier one-arm planning problem was also tested.
\ac{HERB} started with its left arm folded at the elbow and held at approximately the table level of a table.
The task was to plan a trajectory to place the left hand in position to grasp a box (Fig.~\ref{fig:1armHerb}).
The smaller configuration space, $7$ \ac{DOF}, and a starting position partially clear of the table made this an easier planning problem.
In the given $5$~seconds of computational time, both \ac{BITstar} and \ac{RRT}-Connect found a solution in all $25$ trials.
\acs{BITstar} had a median solution cost of $6.8$ while \ac{RRT}-Connect had a median solution cost of $10.6$.
\ac{RRT} was $88\%$ successful with a median solution cost of $11.2$ and Informed \acs{RRTstar} was $88\%$ successful with a median solution cost of $10.6$.
\acs{FMTstar} with $m=50$ was $52\%$ successful with a median solution cost of $9.0$.
All \acs{RRT}-based planners used a a maximum edge length of $1.25$.

\addtolength{\textheight}{-1.1ex}

\begin{figure}[t]
    \centering
    \includegraphics[width=\columnwidth]{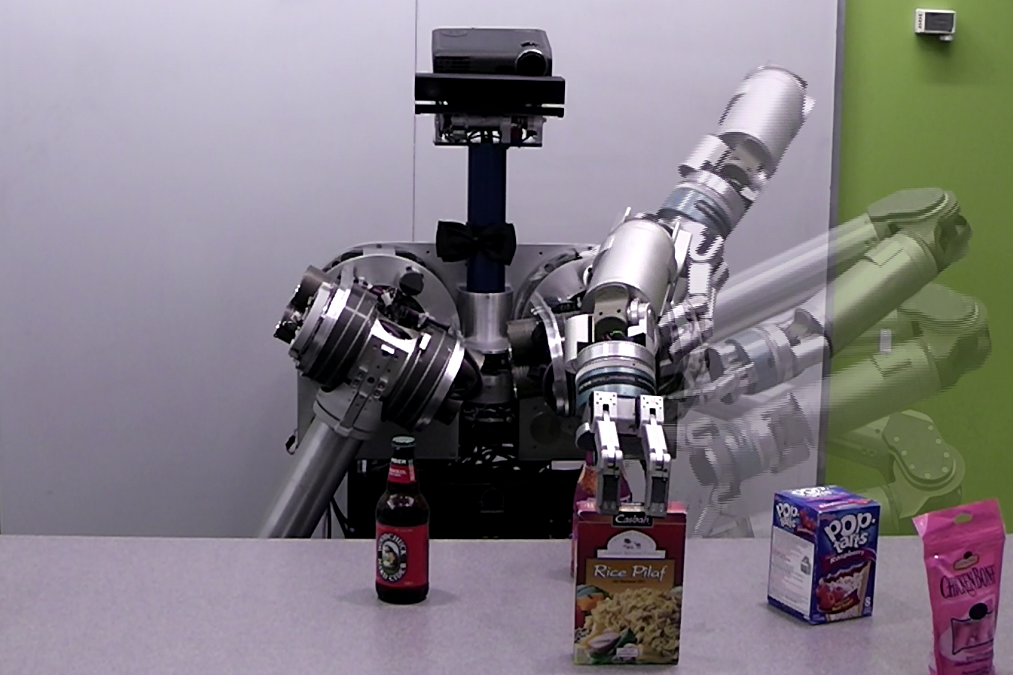}
    \caption{A composite figure of a one-arm trajectory on \acs{HERB} found by \acs{BITstar}.
    Over $25$ trials with $5$ seconds of computational time, \acs{BITstar} had a median solution cost of $6.8$ and success rate of $100\%$, while Informed \acs{RRTstar} and \acs{FMTstar} had median costs of $10.6$ and $9.0$ and success rates of $88\%$ and $52\%$, respectively.
    Nonoptimal planners, \acs{RRT} and \acs{RRT}-Connect, had median costs of $11.2$ and $10.6$ and success rates of $88\%$ and $100\%$,~respectively.}
    \label{fig:1armHerb}
\end{figure}%
\section{Discussion \& Future Work}\label{sec:disc}
\ac{BITstar} demonstrates that anytime sampling-based planners can be designed by combining incremental graph-search techniques with \ac{RGG} theory.
We hope that this work will motivate further unification of these two planning paradigms.

A fundamental component of \ac{BITstar} is the application of heuristic estimates to \emph{all} aspects of path cost.
Doing so allows the algorithm to account for future graph improvements (cost-to-come), avoid unnecessary collision checks and boundary-value problems (edge cost), and order and focus the search (solution cost).
As always, the benefit of these heuristics will depend on their suitability for the specific problem, but we feel that they are an important tool to reduce the \textit{curse of dimensionality}.
Note that while direct sampling of the subproblem is possible for some cost functions \cite{gammell_iros14}, rejection sampling is applicable.
Also note that, as with other heuristically guided searches (e.g., A*), \ac{BITstar} works with the trivial \emph{zero} heuristic (e.g., Dijkstra's algorithm); however, more conservative heuristics provide less benefit to the search.

In describing \ac{BITstar} as an extension of \ac{LPAstar} to continuous planning problems, it is important to note a key difference in how they reuse information.
In \ac{LPAstar}, updating the cost-to-come of a vertex requires reconsidering the cost-to-come of all possibly descendent vertices.
This is a step that becomes prohibitively expensive in anytime resolution planners as graph size increases quickly.
The results of \ac{RRTstar} demonstrate that this is unnecessary for the planner to almost surely converge asymptotically to the optimum as the number of samples approaches infinity.

While the efficiency of graph-search techniques is well understood, this area remains understudied for sampling-based planners.
We are actively investigating whether \ac{BITstar}'s use of graph-search techniques and \ac{RGG} theory can be used to probabilistically evaluate its efficiency.

Also of interest are possible improvements to \ac{BITstar}, including the fact that \ac{BITstar} does not remove samples when connection attempts fail.
This is a requirement of the uniform sample distribution used in \ac{RGG} theory, but leaves edges in the implicit \ac{RGG} that are known to be unusable.

Finding an efficient method to avoid these edges would improve \ac{BITstar}, and there are multiple potential ways to accomplish this.
Failed edges could be tracked and prevented from reentering the queue, but initial attempts have proven too computational expensive.
Samples that fail multiple connection attempts could be removed, but doing so will require \ac{RGG} theory for nonuniform distributions.
Our current focus is on the adaptively varying batch size to increase the rate at which these edges are removed from the \ac{RGG}.

We are also interested in more general extensions to \ac{BITstar}.
Its expanding search is well suited for large or unbounded planning problems, and we have had initial success with a version that generates samples as needed and avoids the \textit{a priori} definition of state space limits.
Its relationship to incremental search techniques also suggests it may be well suited for planning problems in changing environments.
We are also investigating the use of other graph-search techniques, including anytime \cite{likhachev_icaps05,ferguson_icra05,likhachev_ai08} or bidirectional \cite{pohl_mi71, sint_jacm77} searches to decrease the time required to find an initial solution.
Finally, we are investigating combining \ac{BITstar}'s global search with local searches, such as path-smoothing.

\section{Conclusion}\label{sec:conc}
In this paper, we attempt to unify graph-search and sampling-based planning techniques through \ac{RGG} theory.
By recognizing that a set of samples defines an implicit \ac{RGG} and using incremental-search techniques, we are able to combine the efficient search of algorithms such as A*, with the anytime scalability of sampling-based algorithms such as \ac{RRTstar}.
The resulting algorithm, \ac{BITstar}, uses heuristics for all aspects of path cost in order to prioritize the search of high-quality paths and focus the search for improvements.

As demonstrated on both simulated and real-world experiments, \ac{BITstar} outperforms existing sampling-based optimal planners and \ac{RRT}, especially in high dimensions.
For a given computational time, \ac{BITstar} has a higher likelihood of finding a solution and generally finds solutions of equivalent quality sooner.
It also converges towards the optimum faster than other asymptotic optimal planners, and has recently been shown to perform well on problems with differential constraints \cite{xie_icra15}.
Information on the \ac{OMPL} implementation of \ac{BITstar} is available at {\footnotesize\url{http://asrl.utias.utoronto.ca/code}}.

\begin{spacing}{0.9}%
\section*{Acknowledgment}
We would like to thank Christopher Dellin, Michael Koval, and Rachel Holladay for their comments on drafts of this work and Jennifer King for her help with experiments on \ac{HERB}.
This research was funded by contributions from the \ac{NSERC} through the \ac{NCFRN}, the Ontario Ministry of Research and Innovation's Early Researcher Award Program, and the \ac{ONR} Young Investigator Program.
\end{spacing}

\vspace{-1sp}%
\begin{spacing}{0.88}%

\end{spacing}

\end{document}